\newlength{\subfigwidth}
\newlength{\subfigcolsep}
\newcommand{\cmark}{\ding{51}}%
\newcommand{\xmark}{\ding{55}}%
\newcommand{\nn}{\nonumber\\}
\newtheorem{theorem}{Theorem}
\newtheorem{assumption}{Assumption}
\newcommand{\Expect}{\mathbb{E}}
\newcommand{\Prob}{\mathbb{P}}
\newcommand{\Real}{\mathbb{R}}
\newcommand{\xone}{x}
\newcommand{\xtwo}{z}
\newcommand{\resx}{u}
\newcommand{\Xone}{\mathbf{X}}
\newcommand{\Xtwo}{\mathbf{Z}}
\newcommand{\Xonetrain}{\mathbf{X}_{\mathrm{train}}}
\newcommand{\Xtwotrain}{\mathbf{Z}_{\mathrm{train}}}
\newcommand{\resXtrain}{\mathbf{U}_{\mathrm{train}}}
\newcommand{\Strain}{\mathbf{S}_{\mathrm{train}}}
\newcommand{\Ytrain}{\mathbf{Y}_{\mathrm{train}}}
\newcommand{\dimone}{{d_x}}
\newcommand{\dimtwo}{{d_z}}
\newcommand{\Npos}{n_{s=1}}
\newcommand{\Nneg}{n_{s=0}}
\newcommand{\Ymat}{\mathbf{Y}}
\newcommand{\Smat}{\mathbf{S}}
\newcommand{\resX}{U}
\newcommand{\Cov}[2]{\mathrm{Cov}(#1,#2)}
\newcommand{\CovInv}[2]{\mathrm{Cov}^{-1}(#1,#2)}
\newcommand{\Std}[1]{\sigma_{#1}}
\newcommand{\haty}{\hat{y}}
\newcommand{\Ind}{\mathrm{I}}
\newcommand{\convinprob}{\overset{p}{\to}}
\newcommand{\betas}{\mathbf{B}_s}
\newcommand{\hatbetas}{\hat{\mathbf{B}}_s}
\newcommand{\betaxoneexp}{\beta_x}
\newcommand{\betaxtwoexp}{\beta_z}
\author[1]{Junpei Komiyama}
\author[2]{Hajime Shimao}
\affil[1]{University of Tokyo}
\affil[2]{Purdue University}
\begin{document}
\title{Two-stage Algorithm for Fairness-aware Machine Learning}
\maketitle

\begin{abstract}
Algorithmic decision making process now affects many aspects of our lives.
Standard tools for machine learning, such as classification and regression, are subject to the bias in data, and thus direct application of such off-the-shelf tools could lead to a specific group being unfairly discriminated. 
Removing sensitive attributes of data does not solve this problem because a \textit{disparate impact} can arise when non-sensitive attributes and sensitive attributes are correlated. Here, we study a fair machine learning algorithm that avoids such a disparate impact when making a decision. 
Inspired by the two-stage least squares method that is widely used in the field of economics, we propose a two-stage algorithm that removes bias in the training data. The proposed algorithm is conceptually simple. Unlike most of existing fair algorithms that are designed for classification tasks, the proposed method is able to (i) deal with regression tasks, (ii) combine explanatory attributes to remove reverse discrimination, and (iii) deal with numerical sensitive attributes. The performance and fairness of the proposed algorithm are evaluated in simulations with synthetic and real-world datasets.
\end{abstract}


\section{Introduction}
Algorithmic decision making process now affects many aspects of our lives. Emails are spam-filtered by classifiers, images are automatically tagged and sorted, and news articles are clustered and ranked. 
These days, even decisions regarding individual people are being made algorithmically. For example, computer-generated credit  scores are popular in many countries, and job interviewees are sometimes evaluated by assessment algorithms\footnote{https://www.hirevue.com/}. 
However, a potential loss of transparency, accountability, and fairness arises when decision making is conducted on the basis of past data. For example, if a dataset indicates that specific groups based on sensitive attributes (e.g., gender, race, and religion) are of higher risk in receiving loans, direct application of machine learning algorithm would highly likely  result in loan applicants on those groups being rejected. 

In other words, a machine learning algorithm that utilizes sensitive attributes is subject to biases in the existing data.
This could be viewed as an algorithmic version of \textit{disparate treatment} \cite{uslabor}, where decisions are made on the basis of these sensitive attributes. However, removing sensitive attributes from the dataset is not sufficient solution as it has a \textit{disparate impact}.
Disparate impact is a notion that was born in the 1970s. The U.S. Supreme Court ruled that the hiring decision at the center of the Griggs v. Duke Power Co. case \cite{griggs} was illegal because it disadvantaged an application of to a certain race, even though the decision was not explicitly determined based on the basis of race. Duke Power Co. was subsequently forced to stop using test scores and diplomas, which are highly correlated with race, in its hiring decisions. Later on, the U.S. Equal Employment Opportunity Commission \cite{eeoc} stated that disparate impact arises when some group has a less than 80\% selection probability in hiring, promotion, and other employment decisions involving race, sex, or ethnic group, which is so-called the 80\%-rule. Economists have argued that not only is disparate impact ethically problematic, in the labor market, it may also cause inefficiency in social welfare even when the discrimination is based on statistical evidence \cite{norman2003statistical}. 

In this paper, we are interested in a fair algorithm that prevents disparate impact. 
The study of disparate impact is not new in the context of fairness-aware machine learning. As we review in Section \ref{subsec_related}, there are a number of existing studies on this topic. However, there are three major limitations on the existing algorithms intended to alleviate disparate impact:
\begin{itemize}
\item Most of the existing algorithms are built for classification tasks and cannot deal with regression tasks. Arguably, classification is very important. A fair classifier avoids disparate impact in making decisions on admission in universities and hiring by companies. Still, there are tasks that require continuous attributes, such as salaries quoted in a job offer and penalties of criminals. Unfortunately, only a few algorithms Calders et al. \cite{DBLP:conf/icdm/CaldersKKAZ13,berk17} are able to handle regression. Note that data pre-processing algorithms such as Zemel et al. \cite{DBLP:conf/icml/ZemelWSPD13,DBLP:conf/kdd/FeldmanFMSV15} could also be combined with regressors.
\item Direct application of fairness could lead to reverse discrimination. To see this, let us take the example of income \cite{DBLP:conf/icdm/ZliobaiteKC11}. In the Adult dataset \footnote{https://archive.ics.uci.edu/ml/machine-learning-databases/adult/}, women on average have lower incomes than men. However, women in the dataset work fewer hours than men per week on average. A fairness-aware classifier built on the top of this dataset, which equalizes the wages of women and men, leads to a reverse discrimination that makes the salary-per-hour of men smaller than that of women. Such discrimination can be avoided by introducing explanatory attributes and this allows us to make a difference on the basis of the explanatory attributes. In fact, the as in case of Griggs v. Duke Power Co., promoting decisions that cause disparate impacts is not allowed because they are not based on a reasonable measure of job performance, which implies (in some cases) decisions can be fair if they are of reasonable explanatory attributes. Unfortunately, most of the existing studies cannot utilize explanatory attributes.
\item Existing algorithms cannot deal with numerical (continuous) sensitive attributes. Although most sensitive attributes, such as gender, race, and religions are binary or categorical (polyvalent), some sensitive attributes are naturally dealt with in terms of numerical values. For example, the Age Discrimination Act \cite{agedisc} in the U.S. prohibits discrimination in hiring, promotion, and compensation on the basis of age for workers age 40 or above; here, age is a sensitive attribute that is naturally dealt with numerically.  
\end{itemize}

\begin{table*}[b!]
\caption{List of fair estimators and their capabilities. ``Categorical sensitive attrs'' indicates that an algorithm can deal with more than binary sensitive attributes. ``Numeric sensitive attrs'' indicates that an algorithm can deal with continuous sensitive attributes. ``explanatory attrs'' indicates that an algorithm utilizes some attributes that justify the treatment (e.g. the effect of working hours on wages) \cite{DBLP:conf/icdm/ZliobaiteKC11}. The checkmark indicates the capability of the algorithm in the corresponding aspect.}
\begin{small}
\begin{center}
  \begin{tabular}{|c|p{1.7cm}|p{1.7cm}|p{1.7cm}|p{1.7cm}|p{1.7cm}|} \hline
   algorithms & categorical sensitive attrs & numeric sensitive attrs & explanatory attrs & classification & regression \\\hline\hline
Kamiran et al. \cite{kamiran2010} & \xmark & \xmark & \xmark & \cmark & \xmark \\\hline
Zliobaite et al. \cite{DBLP:conf/icdm/ZliobaiteKC11} & \xmark & \xmark & \cmark & \cmark & \xmark \\\hline
Kamishima et al. \cite{DBLP:conf/pkdd/KamishimaAAS12} & \xmark & \xmark & \xmark & \cmark & \xmark \\\hline
Calders et al. \cite{DBLP:conf/icdm/CaldersKKAZ13} & \xmark & \xmark & \cmark & \cmark & \cmark \\\hline
Zemel et al. \cite{DBLP:conf/icml/ZemelWSPD13} & \cmark & \xmark & \xmark & \cmark & \cmark \\\hline
Fish et al. \cite{fish15} & \xmark & \xmark  & \xmark & \cmark & \xmark \\\hline
Feldman et al. \cite{DBLP:conf/kdd/FeldmanFMSV15} & \cmark & \xmark & \xmark & \cmark & \cmark \\\hline
Zafar et al. \cite{DBLP:conf/aistats/ZafarVGG17} & \cmark & \cmark & \xmark & \cmark & \xmark \\\hline
\textbf{This paper}  & \cmark & \cmark & \cmark & \cmark & \cmark \\\hline
  \end{tabular}

\end{center}
\end{small}
\label{tbl_related work}
\end{table*}

Table \ref{tbl_related work} compares our algorithm with the existing ones.

\textbf{Contributions:} Inspired by the econometrics literature, we propose a two-stage discrimination remover (2SDR) algorithm (Section \ref{sec_algorithm}). The algorithm consists of two stages. The first removes disparate impact, and the second is for prediction. The first stage can be considered to be a data transformation that makes the linear classifiers of the second stage fair. 
Theoretical property of 2SDR is analyzed in Section \ref{sec_theory}.
We verified the practical utility of 2SDR by using real-world datasets (Section \ref{sec_experiment}).
We showed that 2SDR is a fair algorithm that (i) performs quite well in not only regression tasks but also classification tasks and (ii) is able to utilize explanatory attributes to improve estimation accuracy. Moreover (iii), it reduces discrimination bias in numeric sensitive attributes, which enables us to avoid other classes of discrimination, such as age discrimination \cite{agedisc}.

\subsection{Related work}
\label{subsec_related}

This section reviews the previous work on fairness-aware machine learning algorithms.
These algorithms can be classified into two categories: 
Algorithms of the first category process datapoints before or after putting them into classifier or regressor. Such an algorithm typically transforms training datasets so as to remove any dependency between the sensitive attribute and target attribute. The advantage of these algorithms is generality: they can be combined with a larger class of off-the-shelf algorithms for classification and regression. Moreover, the transformed data can be considered as a ``fair representation'' \cite{DBLP:conf/kdd/FeldmanFMSV15} that is free from discrimination. The biggest downside of these algorithms that they treat a classifier as a black-box, and as a result, they need to change the datapoints drastically, which tends to reduce accuracy.
Regarding the algorithms of this category, Kamiran et al. \cite{kamiran2010} proposed a data-debiasing scheme by using a ranking algorithm. They were inspired by the idea that the datapoints close to the class borderline are prone to discrimination, and they resample datapoints so as to satisfy fairness constraints. Zliobaite et al. \cite{DBLP:conf/icdm/ZliobaiteKC11} argued that some part of discrimination is explainable by some attributes. They also proposed resampling and relabelling methods that help in training fair classifiers.
Zemel et al. \cite{DBLP:conf/icml/ZemelWSPD13} proposed a method to learn a discrete intermediate fair representation. 
Feldman et al. \cite{DBLP:conf/kdd/FeldmanFMSV15} considered a quantile-based transformation of each attribute.
Hardt et al. \cite{DBLP:conf/nips/HardtPNS16} studied the condition of equalized odds, and provided a post-processing method that fulfills the condition.

Algorithms of the second category directly classify or regress datapoints. Such algorithms tend to perform well in practice since they do not need to conduct explicit data transformation that loses some information. The downside of these algorithms is that one needs to modify an existing classifier for each task. Regarding the algorithms of this approach,
Ristanoski et al. \cite{DBLP:conf/cikm/RistanoskiLB13} proposed a version of support vector machine (SVM), called SVMDisc, that involves a discrimination loss term.
Fish et al. \cite{fish15} shifted the decision boundary of the classical AdaBoost algorithm so that fairness is preserved.
Goh et al. \cite{DBLP:conf/nips/GohCGF16} considered a constrained optimization that satisfies various constraints including the one of fairness. Kamishima et al. \cite{DBLP:conf/pkdd/KamishimaAAS12} proposed prejudice index and proposed a regularizer to reduce prejudice.
Zafar et al. \cite{DBLP:conf/aistats/ZafarVGG17} considered a constrained optimization for classification tasks that maximizes accuracy (resp. fairness) subject to fairness (resp. accuracy) constraint.

Our two-stage approach lies somewhere between the data preprocessing approach and direct approach. The first stage of 2SDR transforms datasets to make the classifier or regressor in the second stage fair. Unlike most data preprocessing algorithms, the transformation of the first stage in 2SDR conducts the minimum amount of transformation that is primarily intended for linear algorithms, and thus, it does not degrade the original information by much. Moreover, any class of linear algorithm can be used in the second stage, and as a result our algorithm can handle more diverse range of tasks and conditions than the existing algorithms can.

Note that other tasks have been considered in the literature of fairness-aware machine learning. To name a few, Kamishima et al. \cite{DBLP:conf/recsys/KamishimaAAS12,DBLP:conf/icdm/KamishimaAAS16} considered methods for removing discrimination in recommendation tasks. Joseph et al. \cite{DBLP:conf/nips/JosephKMR16} considered fairness in the context of online content selection. Bolukbasi et al. \cite{nips:17:03} considered fairness in dense word representation learnt from text corpora.

\section{Problem}
\label{sec_problem}

Each vector in this paper is a column vector and is identified as a $d \times 1$ matrix where $d$ is the dimension of the vector.
Let $n$ be the number of datapoints.
The $i$-th datapoint is comprised of a tuple $(s_i, \xone_i, \xtwo_i, y_i)$, where
\begin{itemize}
\item
$s_i \in \Real^{d_s}$ is the "sensitive" attributes of $d_s$ dimensions that requires special care (e.g., sex, race, and age).
\item
$\xone_i \in \Real^{\dimone}$ is the normal non-sensitive attributes of $\dimone$ dimensions. The difficulty in fairness-aware machine learning is that $\xone_i$ is correlated with $s_i$ and requires to be "fairness adjusted".
\item
$\xtwo_i \in \Real^{\dimtwo}$ is the set of explanatory attributes of $\dimtwo$ dimensions that either are not independent of $s_i$, or not to be adjusted for whatever philosophical reasons. Note that $\xtwo_i$ can be blank (i.e., $\dimtwo=0$) when no explanatory attribute is found.
\item 
$y_i$ is the target attribute to predict. In the case of classification, $y_i \in \{0,1\}$, whereas in the case of regression, $y_i \in \Real$.
\end{itemize}
Note that, unlike most existing algorithms, we allow $s_i$ to be continuous. 
A fairness-aware algorithm outputs $\hat{y}(s, \xone, \xtwo)$, which is an estimator of $y$ that complies with some fairness criteria, which we discuss in the next section.
We also use $\Ymat \in \Real^{n \times 1}, \Xone  \in \Real^{n \times \dimone}, \Xtwo  \in \Real^{n \times \dimtwo}, \Smat  \in \Real^{n \times d_s}$ to denote a sequence of $n$ datapoints. Namely, the $i$-th rows of $\Smat, \Xone, \Xtwo$, and $\Ymat$ are $s_i^\top, \xone_i^\top, \xtwo_i^\top$, and $y_i^\top$, respectively.

\subsection{Fairness criteria}
\label{subsec_fair}

This section discusses fairness criteria that a fairness-aware algorithm is expected to comply with. We consider group-level fairness in the sense of preventing disparate impact \cite{eeoc}, which benefits some group disproportionally.
For ease of discussion, we assume $d_s=1$ and $s$ is a binary\footnote{Although there are several possible definitions, it is not very difficult to extend a fairness measure of binary $s$ to one of a categorical $s$.} or real single attribute.  Note that our method (Section \ref{sec_algorithm}) is capable of dealing with (i) multiple attributes and (ii) categorical $s$ by expanding dummies (i.e., converting them into multiple binary attributes). Let $(s,\xone,\xtwo,y)$ be a sample from the target dataset to make a prediction.  Let $\haty = \haty(\xone, \xtwo)$ be an estimate of $y$ that an algorithm outputs.
For binary $s$ and $\haty$, the P\%-rule \cite{eeoc,DBLP:conf/aistats/ZafarVGG17} is defined as
\begin{equation}
\min_p{\left( \frac{\Prob[\haty=1|s=1]}{\Prob[\haty=1|s=0]}, \frac{\Prob[\haty=1|s=0]}{\Prob[y=1|s=1]} \right)} \ge \frac{p}{100}.
\end{equation}
The rule states that each group has a positive probability at least p\% of the other group. The 100\%-rule implies perfect removal of disparate impact on group-level fairness, and a large value of $p$ is preferred.

For binary $s$ and continuous $\haty$, an natural measure that corresponds to the p\%-rule is the mean distance (MD) \cite{DBLP:conf/icdm/CaldersKKAZ13}, which is defined as:
\begin{equation}
\left| \Expect[\haty|s=1] - \Expect[\haty|s=0] \right|,
\label{eq_md}
\end{equation}
which is a non-negative real value, and a MD value close to zero implies no correlation between $s$ and $y$.
Moreover, Calders et al. \cite{DBLP:conf/icdm/CaldersKKAZ13} introduced the area under the receiver operation characteristic  curve (AUC) between $\haty$ and $s$:
\begin{equation}
\left| \frac{ \sum_{i\in \{1,2,\dots,n\}: s_i=1}\sum_{j\in \{1,2,\dots,n\}: s_j=0} \Ind[\haty_i>\haty_j]}{\Npos \times \Nneg} \right|,
\label{eq_auc}
\end{equation}
where $\Ind[x]$ is $1$ if $x$ is true and $0$ otherwise, and $\Npos$ (resp. $\Nneg$) is the number of datapoints with $s=1$ (resp. $s=0$), respectively. The AUC takes value in $[0,1]$ and is equal to $0.5$ if $s$ shows no predictable effect on $y$.

Moreover, for continuous $s$, we use the correlation coefficient (CC) $| \Cov{s}{\haty} |$ between $s$ and $\haty$ as a fairness measure.
Note that, when $s$ is binary, the correlation is essentially equivalent to MD (Eq. \eqref{eq_md} up to a normalization factor.

\section{Proposed Algorithm}
\label{sec_algorithm}

Here, we start by discussing the two-stage least squares (2SLS), a debiasing method that is widely used in statistics, econometrics, and many branches of natural science that involve observational bias (Section \ref{subsec_2sls}). After that, we describe the two-stage discrimination remover (2SDR) for fairness-aware classification and regression (Section \ref{subsec_2sdr}). Section \ref{subsec_comparepre} compares 2SDR with existing data preprocessing methods.

\subsection{Two-stage least squares (2SLS)}
\label{subsec_2sls}

Consider a linear regression model
\[
 y_i = x_i^\top \beta + \epsilon_i,
\]
where the goal is to predict $y_i \in \Real$ from attributes $x_i \in \Real^{\dimone}$. If the noise $\epsilon_i$ is uncorrelated with $x_i$, an ordinary least square $\hat{\beta}_{\mathrm{OLS}} = (X^\top X)^{-1} X^\top Y$ consistently estimates $\beta$. 
However, the consistent property is lost when $x_i$ is correlated with $\epsilon$: Namely, it is well-known \cite{wooldridge} that, under mild assumption
\[
  \hat{\beta}_{\mathrm{OLS}} \convinprob \beta + \frac{\Cov{x}{\epsilon}}{\sigma_x^2},
\]
where $\Cov{x}{\epsilon}$ is the covariance between $x$ and $\epsilon$. $\sigma_x^2$ is the variance of $x_i$, and the arrow $\convinprob$ indicates a convergence in probability. To remove the bias term, one can utilize a set of additional attributes $z_i$ that are (i) independent of $\epsilon_i$, and (ii) correlate with $x_i$. The crux of 2SLS is to project the columns of $X$ in the column space of $Z$: 
\begin{align}
\hat{X} &= Z (Z^\top Z)^{-1} Z^\top X \nn
\hat{\beta}_{\mathrm{2SLS}} &= (\hat{X}^\top \hat{X})^{-1} \hat{X}^\top Y.
\end{align}
Unlike the OLS estimator, the 2SLS estimator consistently estimates $\beta$. That is,
\[
  \hat{\beta}_{\mathrm{2SLS}} \convinprob \beta.
\]

\subsection{Proposed algorithm: 2SDR}
\label{subsec_2sdr}

Our case considers a classification problem with a fairness constraint (Section \ref{sec_problem}). That is, to estimate the relationship 
\[
y_i = \xone_i^\top \beta + \epsilon_i,
\]
subject to fairness criteria that urges an estimator $\haty_i$ to be uncorrelated to $s_i$ (Section \ref{subsec_fair}).
The main challenge here is that $\xone_i$ is correlated with the sensitive attribute $s_i$, and thus, simple use of the OLS estimator yields a dependency between $\haty_i$ and $s_i$. To resolve this issue, we use $\resX = \Xone - \Smat (\Smat^\top \Smat)^{-1} \Smat^\top \Xone$, which is the residual of $\Xone$ that is free from the effect of $\Smat$, for predicting $\Ymat$. 
In the second stage, we use $\resX$ and $\Xtwo$ to learn an estimator of $\Ymat$ by using an off-the-shelf regressor or classifier. The entire picture of the 2SDR algorithm is summarized in Algorithm \ref{alg_2sdr}. 
One may use any algorithm in the second stage: We mainly intend a linear classifier or regressor for the reason discussed in Section (Theorem \ref{sec_theory}). 
Following the literature of machine learning, we learn the first and the second stage with the training dataset, and use them in the testing data set. 

Note that each column $1,2,\dots,\dimone$ of the first stage corresponds to an OLS estimator, which involves no hyperparameter.
In the case that the second-stage classifier or regressor involves hyperparameters, we perform standard $k$-fold cross validation (CV) on the training dataset to optimize the predictive power of them.

\begin{algorithm}[t!]
\caption{2-Stage Discrimination Remover (2SDR).}
\label{alg_2sdr}
\begin{algorithmic}[1]
\STATE Input: Second stage algorithm $f(\xone, \xtwo)$.
\STATE Using training data $(\Strain,\Xonetrain,\Xtwotrain,\Ytrain)$:
\STATE $\hatbetas \leftarrow (\Strain^\top \Strain)^{-1} \Strain^\top \Xonetrain$. \label{line_fststage}
\STATE $\resXtrain \leftarrow \Xonetrain - \Strain \hatbetas$.
\STATE Train the function $f$ with $(\resXtrain, \Xtwotrain)$. 
\FOR {each data point $(s_i, \xone_i, \xtwo_i, y_i)$ in testdata}
  \STATE Predict $\resx_i^\top \leftarrow \xone_i^\top - s_i^\top \hatbetas$.
  \STATE Predict $\hat{y}_i \leftarrow f(\resx_i, \xtwo_i)$.
\ENDFOR
\end{algorithmic}
\end{algorithm}

\subsection{Comparison with other data preprocessing methods}
\label{subsec_comparepre}

The first stage of 2SDR (Line \ref{line_fststage} of Algorithm \ref{alg_2sdr}) learns a linear relationship between $\Smat$ and $\Xone$. This stage transforms each datapoint by making the second stage estimator free from the disparate impact, so one may view 2SDR as a preprocessing-based method that changes the data representation. This section compares 2SDR with existing methods that transform a dataset before classifying or regressing it. 
At a word, there are two classes of data transformation algorithm: An algorithm of the first class utilizes the decision boundary and intensively resamples datapoints close to the boundary \cite{kamiran2010}. Such an algorithm performs well in classifying datasets, but its extension to a regression task is not straightforward. An algorithm of the second class successfully learns a generic representation that can be used with any classifier or regressor afterward \cite{DBLP:conf/icml/ZemelWSPD13,DBLP:conf/kdd/FeldmanFMSV15}. Such an algorithm tends to lose information at the cost of generality: the method proposed by Zemel et al. \cite{DBLP:conf/icml/ZemelWSPD13} maps datapoints into a finite prototypes, and the one in Feldman et al. \cite{DBLP:conf/kdd/FeldmanFMSV15} conducts a quantile-based transformation, and loses the individual modal structures of the datapoints of $s=0$ and $s=1$. As a result, these methods tend to lose estimation accuracy. Moreover, its extension to a numeric $s$ is non-trivial. 
The first stage in our method can be considered to be a minimum transformation for making linear regression fair and preserves the original data structure. Section \ref{sec_experiment} compares the empirical performance of 2SDR with those of Zemel et al. and Feldman et al. \cite{DBLP:conf/icml/ZemelWSPD13,DBLP:conf/kdd/FeldmanFMSV15}.

\section{Analysis}
\label{sec_theory}

This section analyses 2SDR. 
We first assume the linearity between $\Smat$ and $\Xone$ in the first stage, and derive the asymptotic independence of $\resX$ and $\Smat$ (Theorem \ref{thm_indepdence}). Although such assumptions essentially follow the literature of 2SLS and are reasonable, regarding our aim of achieving fairness, a guarantee for any classes of distribution on $\xone$ and $s$ is desired: Theorem \ref{thm_nocor} guarantees the fairness with a very mild assumption when the second stage is a linear regressor.

\begin{assumption}
Assume the following data generation model where datapoints are i.i.d. drawn:
\begin{equation}
 y_i = \xone_i^\top \beta + \epsilon_i \label{eq_snd}
\end{equation}
and 
\begin{equation}
 \xone_i^\top = s_i^\top \betas + \eta_i^\top
\label{eqn_fstlinear}
\end{equation}
where $\epsilon_i \in \Real$ and $\eta_i \in \Real^{\dimone}$ are mean-zero random variables independent of $s_i$.
\label{asm_tsls}
Moreover, the covariance matrix of $\xone$ is finite and full-rank \footnote{Note that this is a sufficient condition for the ``no perfect collinearity'' condition in Wooldridge \cite{wooldridge}.}.
\end{assumption}

The following theorem states that under Assumption \ref{asm_tsls}, $\resx$ is asymptotically independent of $s$.
\begin{theorem}{\rm (Asymptotic fairness of 2SLS under linear dependency)}
Let $(s_i, \xone_i, \xtwo_i, y_i)$ be samples drawn from the same distribution as the training dataset, and $\resx_i$ is the corresponding residual learnt from the training distribution. 
Under Assumption \ref{asm_tsls}, $\resx_i$ is asymptotically independent of $s_i$. Moreover, if $\xtwo_i$ is independent of $s_i$, $\haty_i$ is asymptotically independent of $s_i$.
\label{thm_indepdence}
\end{theorem}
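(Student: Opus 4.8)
The plan is to reduce the statement to consistency of the first-stage OLS fit and then absorb the estimation error through a Slutsky argument. The structural fact to record at the outset is that $\hatbetas$ depends only on the training sample, which is drawn independently of the test point $(s_i,\xone_i,\xtwo_i,y_i)$; hence $\hatbetas$ is independent of $(s_i,\eta_i,\xtwo_i)$, and the only quantity that can manufacture a spurious dependence between $\resx_i$ and $s_i$ beyond what Assumption~\ref{asm_tsls} already permits is the estimation error $\Delta_n:=\hatbetas-\betas$.

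First I would show $\hatbetas\convinprob\betas$ as the training size $n$ grows. Writing $\Xonetrain=\Strain\betas+\resXtrain$ from \eqref{eqn_fstlinear}, with the $i$-th row of $\resXtrain$ equal to $\eta_i^\top$, gives $\Delta_n=(\tfrac1n\Strain^\top\Strain)^{-1}(\tfrac1n\Strain^\top\resXtrain)$; the law of large numbers yields $\tfrac1n\Strain^\top\Strain\convinprob\Expect[s s^\top]$ (invertible, as is implicit in the first-stage construction) and $\tfrac1n\Strain^\top\resXtrain\convinprob\Expect[s\eta^\top]=\Expect[s]\,\Expect[\eta^\top]=0$, using that $\eta$ is mean-zero and independent of $s$. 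Hence $\Delta_n\convinprob 0$. This is the usual 2SLS consistency step (cf.\ Wooldridge \cite{wooldridge}).

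Next, for the test point, \eqref{eqn_fstlinear} gives
\begin{equation}
\resx_i^\top=\xone_i^\top-s_i^\top\hatbetas=\eta_i^\top-s_i^\top\Delta_n .
\end{equation}
Since the law of $s_i$ does not depend on $n$, $s_i$ is tight, and together with $\Delta_n\convinprob 0$ and the independence of $\Delta_n$ from $s_i$ this forces $s_i^\top\Delta_n\convinprob 0$. Therefore the joint law of $(\resx_i,s_i)$ converges to that of $(\eta_i,s_i)$ (Slutsky: one coordinate of the fixed vector $(\eta_i,s_i)$ is perturbed by an $o_p(1)$ term). As $\eta_i$ is independent of $s_i$ by Assumption~\ref{asm_tsls}, this limiting law is the product of its marginals, which is exactly the asserted asymptotic independence of $\resx_i$ and $s_i$. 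For the second claim the same display shows $(\resx_i,\xtwo_i,s_i)$ converges in law to $(\eta_i,\xtwo_i,s_i)$; if $(\eta_i,\xtwo_i)$ is independent of $s_i$, the limit factorizes over $s_i$, and since $\haty_i=f(\resx_i,\xtwo_i)$ is a measurable (and, for a linear second stage, continuous) function of $(\resx_i,\xtwo_i)$, the continuous mapping theorem gives that $(\haty_i,s_i)$ also converges to a product law, i.e.\ $\haty_i$ is asymptotically independent of $s_i$.

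I expect the main obstacle to be pinning down ``asymptotic independence'' and then checking that the vanishing estimation error genuinely decouples in the limit: one should fix a definition (convergence of the joint distribution of $(\resx_i,s_i)$ to a product measure is the cleanest), after which the Slutsky step, plus the elementary fact that a measurable function of an $s_i$-independent limit is itself $s_i$-independent, finish the argument — modulo a null-set hypothesis on the discontinuities of $f$ in the general nonlinear case. A secondary point I would flag is that the literal hypothesis ``$\xtwo_i$ independent of $s_i$'' does not suffice for the $\haty_i$ statement; one needs the joint independence $(\eta_i,\xtwo_i)\perp s_i$, which is the natural strengthening in this model and which I would state explicitly.
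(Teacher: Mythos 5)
Your proof is correct and follows essentially the same route as the paper's: first-stage OLS consistency gives $\hatbetas \convinprob \betas$, hence $\resx_i \convinprob \eta_i$, and the independence of $\eta_i$ from $s_i$ yields the claim; you merely make explicit the law-of-large-numbers and Slutsky steps that the paper delegates to Wooldridge. Your closing remark is also well taken: the paper's own proof of the second claim argues only from coordinate-wise asymptotic independence of $\resx_i$ and $\xtwo_i$ from $s_i$, so the joint independence $(\eta_i,\xtwo_i) \perp s_i$ that you identify is the hypothesis genuinely needed there as well.
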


\begin{proof}
Under Assumption \ref{asm_tsls}, it is well known (e.g., Thm 5.1 in Wooldridge \cite{wooldridge}) that the first-stage estimator is consistent. That is, $\hatbetas \convinprob \betas$ as $n \rightarrow \infty$, from which we immediately obtain $\resx_i \convinprob \eta_i$. By the assumption that $\eta_i$ is independent of $s_i$, $\resx_i$ is asymptotically independent of $s_i$. The independence of $\haty_i$ and $s_i$ follows from the fact that $\haty_i$ is a function of $\resx_i$ and $\xtwo_i$ that are asymptotically independent of $s_i$.
\end{proof}

From Theorem \ref{thm_indepdence}, we see that 2SDR combined with any classifier or regressor in the second-stage is fair (i.e., achieves a p\%-rule for any $p<100$ (resp. any MD $>0$) in classification (resp. regression) with a sufficiently large dataset.
Essentially, Theorem \ref{thm_indepdence} states that if the relation between $\resx$ and $s$ is linear, the first-stage OLS estimator is able to learn the relationship between them, and as a result $\resx$ is asymptotically equivalent to $\eta$, which is the fraction of $\resx$ that cannot be explained by $s$. 

\begin{figure}[t!]
\begin{center}
  \setlength{\subfigwidth}{.90\linewidth}
  \addtolength{\subfigwidth}{-.90\subfigcolsep}
  \begin{minipage}[t]{\subfigwidth}
  \centering
 \subfigure[Distribution of PctUnemployed in the C\&C dataset]{
 \includegraphics[scale=0.6]{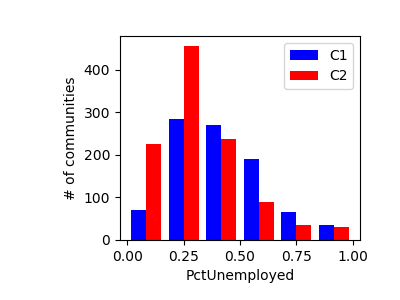}
 }
  \end{minipage}\hfill
  \begin{minipage}[t]{\subfigwidth}
  \centering
  \subfigure[Distribution of age in the Adult dataset]{
   \includegraphics[scale=0.6]{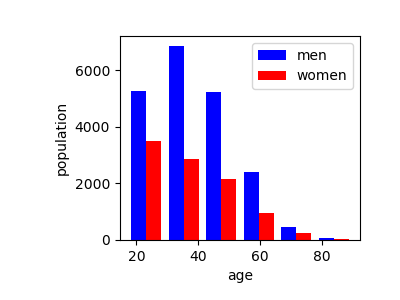}  
  }
  \end{minipage}\hfill
\end{center}
\caption{The first histogram (Figure (a)) shows the percentage of people in the labor force and unemployed (PctUnemployed) in each community in the C\&C dataset, where the horizontal axis is PctUnemployed and vertical axis is the number of corresponding communities. The communities are categorized into the ones with a large portion of black people (C1) and the others (C2). One can see that PctUnemployed in C2 is sharply centered around 0.25, whereas the value in C1 shows a broader spectrum: As a result the variance of PctUnemployed is greatly differ among the two categories. The second histogram (Figure (b)) shows the number of people of different age in the Adult dataset, where the horizontal axis is the age and the vertical axis is the number of people. One can see that not only the variances but also the form of distributions are different between women and men, as majority of the women in the dataset are of the youngest category. The details of these datasets are provided in Section \ref{sec_experiment}.}
\label{fig_hetero}
\end{figure}%

\textbf{Heteroskedasticity in $\xone$:}
As long as Assumption \ref{asm_tsls} holds, $\xone$ is asymptotically independent of $s$. 
However, some of the assumptions may not hold for some attributes in a dataset. 
In particular, Eq. \eqref{eqn_fstlinear} implies that $\xone$ is linear to $s$, and thus, the distribution of $\xone$ conditioned on $s=1$ and $s=0$ is identical after correcting the bias $\Expect[\xone|s=1]-\Expect[\xone|s=0]$ \footnote{For the ease of discussion, let $s$ be binary value here.}. Figure \ref{fig_hetero} shows some attributes where the distribution of $\xone$ is very different among $s=1$ and $s=0$. Taking these attributes into consideration, we would like to seek some properties that hold regardless of the linear assumption in the first stage. The following theorem states that 2SDR has a plausible property that makes $\haty$ fair under very mild assumptions.
\begin{theorem}{\rm (Asymptotic fairness of 2SLS under general distributions)}
Assume that each training and testing datapoint is i.i.d. drawn from the same distribution. Assume that  the covariance matrix of $\xone$ and $s$ are finite and full-rank. Assume that the covariance matrix between $\xone$ and $s$ is finite. Then, the covariance vector $\Cov{s}{\resx} \in \Real^{d_s \times \dimone}$ converges to $0$ in probability as $n \rightarrow \infty$, where $0$ denotes a zero matrix.
\label{thm_nocor}
\end{theorem}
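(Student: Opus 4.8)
The plan is to reduce the statement to the classical consistency of ordinary least squares for the \emph{population linear projection}, which requires only moment conditions and, crucially, does not need the first-stage relation \eqref{eqn_fstlinear} to actually be linear. First I would fix the test residual $\resx_i = \xone_i - \hatbetas^\top s_i$, where $\hatbetas$ is computed from the (independent) training sample; conditioning on $\hatbetas$, the quantity $\Cov{s}{\resx}$ is a deterministic function of $\hatbetas$ through the test-population covariances, and by bilinearity of covariance
\[
  \Cov{s}{\resx} \;=\; \Cov{s}{\xone} \;-\; \Cov{s}{s}\,\hatbetas .
\]
Here I would make explicit the harmless convention that the first-stage regression includes an intercept (equivalently, that $s$ carries a constant coordinate, or that $\xone$ and $s$ are centered); this is precisely what makes the centered covariance, rather than the raw second moment $\Expect[s\resx^\top]$, appear on the left-hand side. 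It then suffices to prove $\Cov{s}{s}\,\hatbetas \convinprob \Cov{s}{\xone}$.

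Next I would establish $\hatbetas \convinprob \Cov{s}{s}^{-1}\Cov{s}{\xone}$. Writing $\hatbetas = \big(\tfrac1n \Strain^\top\Strain\big)^{-1}\big(\tfrac1n \Strain^\top\Xonetrain\big)$, the i.i.d.\ assumption and the weak law of large numbers give $\tfrac1n \Strain^\top\Strain \convinprob \Expect[ss^\top]$ and $\tfrac1n \Strain^\top\Xonetrain \convinprob \Expect[s\xone^\top]$ (the integrability $\Expect[\|s\|\,\|\xone\|]<\infty$ follows from the assumed finiteness of the covariance matrices of $s$ and of $\xone$ via Cauchy--Schwarz). The full-rank hypothesis makes $\Expect[ss^\top]$ invertible, matrix inversion is continuous there, so the continuous mapping theorem together with Slutsky's lemma yields $\hatbetas \convinprob \Expect[ss^\top]^{-1}\Expect[s\xone^\top]$; with the intercept in place, the partitioned-regression (Frisch--Waugh) identity rewrites this limit as $\Cov{s}{s}^{-1}\Cov{s}{\xone}$, i.e.\ the coefficient of the population best linear predictor of $\xone$ from $s$. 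Plugging this into the display and applying Slutsky once more,
\[
  \Cov{s}{\resx} \;=\; \Cov{s}{\xone} - \Cov{s}{s}\,\hatbetas \;\convinprob\; \Cov{s}{\xone} - \Cov{s}{s}\,\Cov{s}{s}^{-1}\Cov{s}{\xone} \;=\; 0 .
\]

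I expect the \textbf{main obstacle} to be the bookkeeping around the intercept and the precise meaning of $\Cov{s}{\resx}$. The algorithm as written in Line \ref{line_fststage} regresses $\Xonetrain$ on $\Strain$ without an explicit constant column; the projection identity then only forces $\Strain^\top\resXtrain=0$, so in the limit merely the uncentered cross-moment $\Expect[s\resx^\top]$, not the covariance, is guaranteed to vanish, unless one also checks that the residual is asymptotically mean-zero (which holds once the intercept/centering is included). This is exactly the gap between Theorem \ref{thm_indepdence} and Theorem \ref{thm_nocor}: dropping the linearity assumption \eqref{eqn_fstlinear}, heteroskedasticity of $\xone$ across the levels of $s$ (Figure \ref{fig_hetero}) can survive, so one can only hope to kill the covariance, not the full dependence. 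A secondary, routine point is that for a finite test sample one should either read $\Cov{s}{\resx}$ as the test-population covariance conditional on $\hatbetas$ (which makes the above immediate) or invoke one more law of large numbers over the test sample to handle the empirical covariance; either way the argument is unchanged.
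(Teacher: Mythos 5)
Your proposal is correct and follows essentially the same route as the paper's own proof: express $\Cov{s}{\resx}$ via bilinearity as $\Cov{s}{\xone}-\Cov{s}{s}\,\hatbetas$, show $\hatbetas \convinprob \CovInv{s}{s}\Cov{s}{\xone}$ by the law of large numbers, and conclude by Slutsky. The only difference is that you make explicit the intercept/centering convention (and the conditioning on the training-sample $\hatbetas$) needed to pass from the uncentered moments $\Expect[ss^\top]^{-1}\Expect[s\xone^\top]$ to the covariance form, a point the paper's proof leaves implicit.
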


\begin{proof}
Let $(s,\xone, \xtwo, y)$ be a sample from the identical distribution.
The OLS estimator in the first stage is explicitly written as
\[
  \hatbetas = (\Strain^\top \Strain)^{-1} \Strain^\top \Xonetrain,
\]
which, by the law of large numbers, converges in probability to $\CovInv{s}{s} \Cov{s}{\xone}$, where $\CovInv{s}{s} \in \Real^{d_s \times d_s}$ is the inverse of the covariance matrix of $s$, and $\Cov{s}{\xone} \in \Real^{d_s \times \dimone}$ is the covariance matrix between $s$ and $\xone$.
Then, 
\begin{align}
 \Cov{s}{\resx} 
 &= \Cov{s}{\xone} - \Cov{s}{\hatbetas^\top s} \nn
 &\convinprob \Cov{s}{\xone} - \Cov{s}{s} \CovInv{s}{s} \Cov{s}{\xone} \nn
 &= \Cov{s}{\xone} - \Cov{s}{\xone} = 0.
\end{align}
\end{proof}

\begin{figure*}[t!]
\begin{center}
  \setlength{\subfigwidth}{.249\linewidth}
  \addtolength{\subfigwidth}{-.249\subfigcolsep}
  \begin{minipage}[t]{\subfigwidth}
  \centering
 \subfigure[CC as a func. of $n$]{
 \includegraphics[scale=0.35]{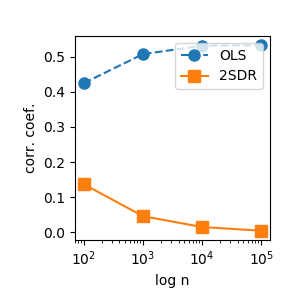}
 }
  \end{minipage}\hfill
  \begin{minipage}[t]{\subfigwidth}
  \centering
  \subfigure[CC as a func. of $\dimone$]{
   \includegraphics[scale=0.35]{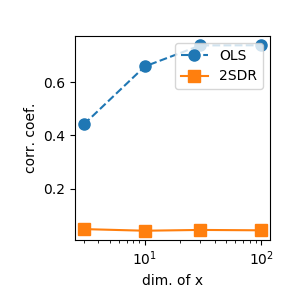}  
  }
  \end{minipage}\hfill
  \begin{minipage}[t]{\subfigwidth}
  \centering
 \subfigure[CC as a func. of $\sigma_{\eta s}$]{
 \includegraphics[scale=0.35]{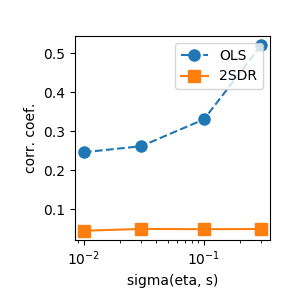}
 }
  \end{minipage}\hfill
  \begin{minipage}[t]{\subfigwidth}
  \centering
 \subfigure[CC as a func. of $\Std{s}$]{
 \includegraphics[scale=0.35]{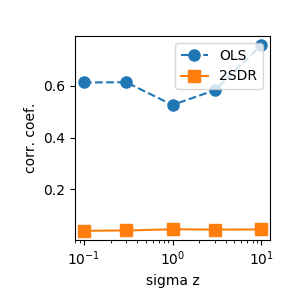}
 }
  \end{minipage}\hfill
  \begin{minipage}[t]{\subfigwidth}
  \centering
 \subfigure[RMSE as a func. of $n$]{
 \includegraphics[scale=0.35]{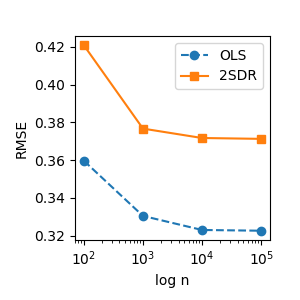}
 }
  \end{minipage}\hfill
  \begin{minipage}[t]{\subfigwidth}
  \centering
 \subfigure[RMSE as a func. of $\dimone$]{
 \includegraphics[scale=0.35]{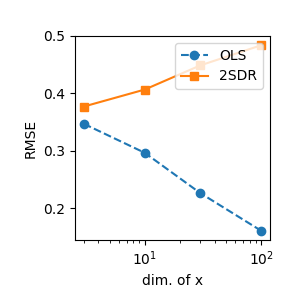}
 }
  \end{minipage}\hfill
  \begin{minipage}[t]{\subfigwidth}
  \centering
 \subfigure[RMSE as a func. of $\sigma_{\eta s}$]{
 \includegraphics[scale=0.35]{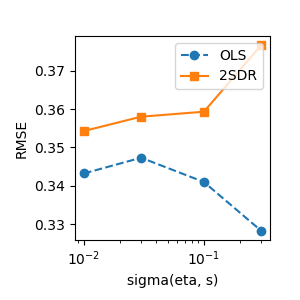}
 }
  \end{minipage}\hfill
  \begin{minipage}[t]{\subfigwidth}
  \centering
 \subfigure[RMSE as a func. of $\Std{s}$]{
 \includegraphics[scale=0.35]{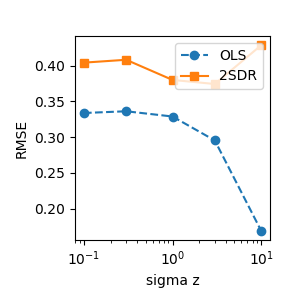}
 }
  \end{minipage}\hfill
\end{center}
\caption{Correlation coefficient (CC) with different parameters (Figures (a)-(d)). Figure (a) is the result with different datasize, Figure (b) is the result with different dimension of $\xone$, Figure (c) is the result with different strength of correlation between $\xone$ and $s$, and Figure (d) is the result with different variance of $s$. One can see that with sufficient large $n$ ($n \ge 1,000$), 2SDR has consistently removes correlation between $s$ and $\haty$. Figures (e)-(h) shows the root mean square error (RMSE) with the same setting as Figures (a)-(d), where RMSE is defined as the squared empirical mean of $(\haty-y)^2$. The larger $\dimone$, $\Cov{\xone}{s}$, or $\Std{s}$ is, the gap of RMSE between 2SDR and OLS is larger. This is because (i) the correlation between $\xone$ and $s$ causes a disparate impact of OLS, (ii) whereas 2SDR, which keeps $\haty$ fair, forces a large bias correction when the correlation is large. For each set of parameters the result is averaged over 100 independent runs.}
\label{fig_synth}
\end{figure*}%

\textbf{Asymptotic fairness of regressor:}
Notice that a linear regressor in the second stage outputs $\haty$ as a linear combination of the elements of $\resx$ and $\xtwo$. Theorem \ref{thm_nocor} implies that a regressor is asymptotically fair in the sense of MD (for binary $s$) or correlation coefficient (for continuous $s$). Unfortunately, it does not necessarily guarantee a fair classification under heteroskedasticity: A linear classifier divides datapoints into two classes by a linear decision boundary (i.e. $\haty$ is whether a linear combination of $\resx$ and $\xtwo$ is positive or negative), and no correlation between $\resx, \xtwo$ and $s$ does not necessarily implies no correlation property between $\haty$ and $s$. Still, later in Section \ref{sec_experiment} we empirically verify the fairness property of 2SDR in both classification and regression.

\textbf{Generalization and finite-time analysis:}
The analysis in this section is very asymptotic and lacks a finite time bound. As OLS is a parametric model, the standard central limit theorem can be applied to obtain the asymptotic properties of the 2SDR estimator: Like the 2SLS estimator, the 2SDR estimator is expected to converge at a rate of $O(1/\sqrt{n})$.

\section{Experiments}
\label{sec_experiment}

In the previous section, we provided results suggesting that 2SDR achieves fairness in an asymptotic sense. To verify the actual performance of 2SDR, we conducted computer simulations. We first describe its results for a synthetic dataset (Section \ref{subsec_synth}), and then describes its results for five real-world datasets (Section \ref{subsec_real}). Our simulation was implemented in Python by using the scikit-learn library\footnote{http://scikit-learn.org/}.
Each of the simulations took from several seconds to several minutes on a modern PC.

\subsection{Synthetic dataset}
\label{subsec_synth}

This section compares 2SDR with the standard OLS estimator on synthetically-generated datasets. 
Each data point $(s_i, \xone_i, \xtwo_i, y_i)$ was generated from the following process, which is the standard assumption in the two-stage regression problem (Section \ref{subsec_2sls}):
\begin{align}
&y_i = \xone_i^\top \betaxoneexp + \xtwo_i^\top \betaxtwoexp + \epsilon  \label{eq_synthsecond} \\
& \xone_i = s_i^\top \beta_s + \eta_i \label{eq_synthfirst} \\
& \xtwo_i \sim N(0,\sigma_\xtwo) \\
& \epsilon \sim N(0,\sigma_\epsilon) \\
&(\eta_i, s_i)\sim N\left(0, \begin{pmatrix}
\sigma_{\eta} & \sigma_{\eta s}\nn
\sigma_{\eta s} & \sigma_s
\end{pmatrix} \right).
\end{align}
Obviously, $\xone_i$ and $s_i$ are correlated, and thus, a naive algorithm that tries to learn Eq. \eqref{eq_synthsecond} suffers a disparate impact, whereas 2SDR tries to untangle this dependency by learning the relationship \eqref{eq_synthfirst} in the first stage.
Unless specified, we set each parameters as follows: $\dimone=\dimtwo=5$ and $d_s=1$. $\sigma_\epsilon=3.0$. $\sigma_\eta$, $\sigma_\xtwo$, and $\sigma_s$ are diagonal matrices with each diagonal entry is $1.0$, and $\sigma_{\eta s}$ is a matrix with each entry is $0.3$. Each entry of $\betaxoneexp$ and $\betaxtwoexp$ are $0.5$, and each entry of $\beta_s$ is $0.2$. The number of datapoint $n$ is set to $1,000$, and $2/3$ (resp. $1/3$) of the datapoints are used as training (resp. testing) datasets, respectively. Figure \ref{fig_synth} shows the correlation coefficient as as measure of fairness and root mean squared error (RMSE) as a measure of prediction power for various values of parameters. 2SDR is consistently fair regardless of the strength of the correlation between $\xone$ and $s$.

\subsection{Real-world datasets}
\label{subsec_real}

This section examines the performance of 2SDR in real-world datasets. The primary goal of Section \ref{subsubsec_regression} and \ref{subsubsec_classification} is to compare the results of 2SDR with existing results. We tried to reproduce the settings of existing papers \cite{DBLP:conf/icdm/CaldersKKAZ13,DBLP:conf/kdd/FeldmanFMSV15} as much as possible. Section \ref{subsubsec_numerics} provides the results with numerical $s$. Section \ref{subsubsec_otherdataset} provides additional results for other datasets. Results with several other settings are shown in Section \ref{subsubsec_otheraspects}.

Table \ref{tbl_datasets} provides statistics on these datasets. Unless explicitly described, we only put the intercept attribute (i.e., a constant $1$ for all datapoints) into $\xtwo$. 
We used OLS in each attribute of the first stage, and OLS (resp. the Ridge classifier) in the regression (resp. classification) of the second stage. Note that the ridge classifier is a linear model that imposes l2-regularization to avoid very large coefficients, which performs better when the number of samples is limited. The strength of the regularizer of the Ridge classifier is optimized by using the ten-fold cross-validation (CV). For binary $s$, (i) we removed the attributes of variance conditioned on $s=0$ or $s=1$ being zero because such a attribute gives a classifier information that is very close to $s$, and (ii) we conducted a variance correction after the first stage that makes the variance of $\resX$ conditioned on $s=1$ and $s=0$ identical.



\begin{table}[t!]
\caption{List of regression or classification datasets. $D$ is the number of binary or numeric attributes (after expanding unordered categorical attributes into dummies (i.e., set of binary dummy attributes)), and $N$ is the number of datapoints. }
\begin{center}
  \begin{tabular}{|c|p{2cm}|c|c|} \hline
   datasets & Regression or Classification & $D$ & $N$ \\\hline\hline
Adult & Classification & 49 & 45,222 \\\hline
Communities \& Crime (C\&C) & Regression & 101 & 1,994 \\\hline
Compas & Classification & 12 & 5,855 \\\hline
German & Classification & 47 & 1,000  \\\hline
LSAC & Classification & 24 & 20,798 \\\hline
  \end{tabular}
\end{center}
\label{tbl_datasets}
\end{table}

\subsubsection{Regression results for C\&C dataset}
\label{subsubsec_regression}

We first show the results of a regression on the Communities and Crime\footnote{http://archive.ics.uci.edu/ml/datasets/communities+and+crime} dataset that combines socio-economic data and crime rate data on communities in the United States.
Following Calders et al. \cite{DBLP:conf/icdm/CaldersKKAZ13}, we made a binary attribute $s$ as to the percentage of black population, which yielded $970$ instances of $s=1$ with a mean crime rate $y=0.35$ and $1,024$ instances of $s=0$ with a mean crime rate $y=0.13$. Note that these figures are consistent with the ones reported in Calders et al. \cite{DBLP:conf/icdm/CaldersKKAZ13}. 
Table \ref{tbl_regression} shows the results of the simulation. At a word, 2SDR removes discrimination while minimizing the increase of the root mean square error (RMSE). 
One can see that in the sense of RMSE, OLS and SEM-MP \cite{DBLP:conf/icdm/CaldersKKAZ13} perform the best, although these algorithms do not comply with the two fairness criteria. On the other hand, 2SDR and SEM-S \cite{DBLP:conf/icdm/CaldersKKAZ13} comply with the fairness criteria, and with 2SDR performing better in the sense of regression among the two algorithms. 
Furthermore, we put two attributes (``percentage of divorced females'' and ``percentage of immigrants in the last three years'') into explanatory attributes $\xtwo$, whose results are shown as ``2SDR with explanatory attrs'' in Table \ref{tbl_regression}. One can see that the RMSE of 2SDR with these explanatory attributes is significantly improved and very close to OLS.

\begin{table}[t!]
\caption{Regression Results. The scores are averaged result over $10$-fold cross validation \cite{DBLP:conf/icdm/CaldersKKAZ13}. The results of SEM-S and SEM-MP are the ones reported in Calders et al. \cite{DBLP:conf/icdm/CaldersKKAZ13}. ``2SDR with explanatory attrs'' shows the result of 2SDR with two explanatory attributes (''FemalePctDiv'',''PctImmigRecent'']) added to $\xtwo$. A smaller MD indicates better fairness, and an AUC close to $0.5$ indicates a very fair regressor. Smaller RMSE indicates better regression accuracy.}
\begin{center}
  \begin{tabular}{|c|c|c|c|} \hline
   Algorithm & MD & AUC & RMSE \\\hline\hline
OLS & 0.22 & 0.85 & 0.14 \\\hline
2SDR & 0.02 & 0.48 & 0.18 \\\hline
2SDR with explanatory attrs & 0.12 & 0.69 & 0.15 \\\hline
SEM-S & 0.01 & 0.50 & 0.20 \\\hline
SEM-MP & 0.17 & 0.76 & 0.14 \\\hline
  \end{tabular}
\end{center}
\label{tbl_regression}
\end{table}

\begin{table}[t!]
\caption{Classification results for the Adult dataset. The column ``Accuracy'' presents the classification accuracy. Unlike OLS, which does not take fairness into consideration, 2SDR complies with the 80\%-rule.}
\begin{center}
  \begin{tabular}{|c|c|c|} \hline
   Algorithm & P\%-rule & Accuracy \\\hline\hline
OLS & 0.30 & 0.84 \\\hline
2SDR & 0.83 & 0.82 \\\hline
  \end{tabular}
\end{center}
\label{tbl_adult}
\end{table}

\begin{table}[t!]
\caption{Classification results for the German dataset. Unlike OLS, 2SDR complies with the 80\%-rule. The result is averaged over $100$ random splits over the training and testing datasets, where two-thirds of the datapoints are assigned to the training dataset at each split.}
\begin{center}
  \begin{tabular}{|c|c|c|} \hline
   Algorithm & P\%-rule & Accuracy \\\hline\hline
OLS & 0.47 & 0.73 \\\hline
2SDR & 0.81 & 0.73 \\\hline
  \end{tabular}
\end{center}
\label{tbl_german}
\end{table}



\subsubsection{Classification result with Adult and German datasets}
\label{subsubsec_classification}

This section shows the result of classification with the Adult and German datasets. The adult dataset is extracted from the 1994 census database, where the target binary attribute indicates whether each person's income exceeds 50,000 dollars or not. German is a dataset that classifies people into good or bad credit risks  \footnote{https://archive.ics.uci.edu/ml/datasets/Statlog+(German+Credit+Data)}.
Following Zemel et al. and Feldman et al. \cite{DBLP:conf/icml/ZemelWSPD13,DBLP:conf/kdd/FeldmanFMSV15}, we used sex (resp. age) in the Adult (resp. German) datasets. Age in the German dataset is binarized into Young and Old at the age of $25$ \cite{DBLP:conf/icdm/CaldersKKAZ13}. Some sparse attributes in Adult are summarized to reduce dimensionality \cite{DBLP:conf/www/ZafarVGG17}.

Let us compare the results shown in Tables \ref{tbl_adult} and \ref{tbl_german} with the ones reported in previous papers. In a nutshell, 2SDR, which complies with the 80\%-rule, outperforms two data preprocessing methods on the Adult dataset, and performs as well as them on the German dataset:
Zemel et al. \cite{DBLP:conf/icml/ZemelWSPD13} reported that their data transformation combined with a naive Bayes classifier has $\sim 80\%$ (resp. $\sim 70\%$) accuracy on the Adult (resp. German) datasets.
Moreover, Feldman et al. \cite{DBLP:conf/kdd/FeldmanFMSV15} reported that their data transformation combined with a Gaussian Naive Bayes classifier had accuracy of $79\sim 80\%$ (resp. $70\sim 76\%$) on the Adult (resp. German) datasets. 
The method by Zemel et al. \cite{DBLP:conf/icml/ZemelWSPD13} coarse-grains the data by mapping them into a finite space, which we think the reason why its performance is not as good as ours. Meanwhile, the quantile-based method by Feldman et al. \cite{DBLP:conf/kdd/FeldmanFMSV15} performed impressively well in the German dataset but not very well in the Adult dataset: In the Adult dataset, it needed to discard most of the attributes that are binary or categorical, which we consider as the reason for the results.

\subsubsection{Numeric $s$}
\label{subsubsec_numerics}

Next, we considered numeric sensitive attributes. 
Table \ref{tbl_numerics} shows the accuracy and correlation coefficient in OLS and 2SDR. On the C\&C dataset, 2SDR reduced correlation coefficient (CC) with a minimum deterioration to its RMSE. In other words, 2SDR was a very efficient at removing the correlation between $\hat{y}$ and $s$.
\begin{table}[t!]
\caption{Results in the case $s$ is median income (C\&C) or age (Adult and German). Note that age was not binarized in the result of this table.} 
\begin{center}
  \begin{tabular}{|c|c|c|c|c|c|c|} \hline
   Algorithm (Dataset) & CC & Accuracy & RMSE \\\hline\hline
OLS (C\&C) & 0.50 & - & 0.14 \\\hline
2SDR (C\&C) & 0.04 & - & 0.17  \\\hline
OLS (Adult) & 0.22 & 0.84 & - \\\hline
2SDR (Adult) & 0.07 & 0.83 & -  \\\hline
OLS (German) & 0.11 & 0.76 & - \\\hline
2SDR (German) & 0.05 & 0.75 & -  \\\hline
  \end{tabular}
\end{center}
\label{tbl_numerics}
\end{table}




\subsubsection{Other Datasets}
\label{subsubsec_otherdataset}

Furthermore, we conducted additional experiments on two other datasets (Table \ref{tbl_others}). The ProPublica Compas dataset \cite{propublica} is a collection of criminal offenders screened in Broward County, Florida during 2013-2014, where $\xone$ is a demographic and criminal record of offenders and $y$ is whether or not a person recidivated within two years after the screening. We set sex as the sensitive attribute $s$. The Law School Admissions Council (LSAC) dataset \footnote{http://www2.law.ucla.edu/sander/Systemic/Data.htm} is a survey among students attending law schools in the U.S. in 1991, where $y$ indicates whether each student passed the first bar examination. We set whether or not the race of the student is black as the sensitive attribute. Similar to the German dataset, we used $2/3$ (resp. $1/3$) of the datapoints as training (resp. testing) datasets, and results are averaged over $100$ runs. The results, shown in Table \ref{tbl_others} implies that 2SDR complies with the 80\%-rule with an insignificant deterioration in classification performance on these datasets.

\begin{table}[t!]
\caption{
Results for the Compas and LSAC datasets. We balanced training data by resampling in LSAC dataset to cope with class inbalance problem. Compas-R is a version of the Compas dataset where predictive attributes are dropped: In this version, we dropped the attributes of the original dataset whose correlation with $y$ was stronger than $0.3$. This significantly reduces the prediction accuracy and fairness of the OLS estimator which tries to utilize the available information as much as possible. Unlike OLS, the fairness of 2SLS does not decrease even if these attributes are dropped.} 
\begin{center}
  \begin{tabular}{|c|c|c|c|c|c|c|} \hline
   Algorithm (Dataset) & P\%-rule & Accuracy \\\hline\hline
OLS (Compas) & 0.59 & 0.73 \\\hline
2SDR (Compas) & 0.92 & 0.73 \\\hline
OLS (Compas-R) & 0.19 & 0.65 \\\hline
2SDR (Compas-R) & 0.93 & 0.65 \\\hline
OLS (LSAC) & 0.21 & 0.75 \\\hline
2SDR (LSAC) & 0.86 & 0.73 \\\hline
  \end{tabular}
\end{center}
\label{tbl_others}
\end{table}

\subsubsection{Other settings}
\label{subsubsec_otheraspects}

This section shows results with several other settings.

\textbf{Multiple $s$:}
Here, we report the result for multiple sensitive attributes. Table \ref{tbl_multis} lists the results for the Adult dataset, where $s$ is sex (binary) and age (numeric).
One can see that (i) 2SDR reduces discrimination for both of sensitive attributes with a very small deterioration on the classification performance, and (ii) the power of removing discrimination is weaker than in the case of applying 2SDR to a single $s$.  
\begin{table}[t!]
\caption{Performance of 2SDR on the Adult dataset where $s$ is (sex, age). We show p\%-rule (resp. correlation coefficient, CC) with respect to sex (resp. age). Both fairness criteria are improved by using 2SDR.} 
\begin{center}
  \begin{tabular}{|c|c|c|c|} \hline
   Algorithm & P\%-rule & CC & Accuracy \\\hline\hline
OLS & 0.30 & 0.22 & 0.84 \\\hline
2SDR & 0.65 & 0.10 & 0.82 \\\hline
  \end{tabular}
\end{center}
\label{tbl_multis}
\end{table}

\textbf{Effect of ordinal transformation:}
The method proposed by Feldman et al. \cite{DBLP:conf/kdd/FeldmanFMSV15} conducts a quantile-based transformation. We have also combined the transformation with 2SDR. Let $\xone_{i,(k)}$ be the $k$-th attribute in $\xone_i$. A quantile-based transformation maps each attribute $\xone_{i,(k)}$ into its quantile rank among its sensitive attributes $s_i$:
\begin{equation}
 \mathrm{Rank}_{i,k} = \frac{\sum_{j\in\{1,2,\dots,n\}:s_i=s_j} \Ind[\xone_{i,(k)}>\xone_{i,(k)}] }{ |\{j\in\{1,2,\dots,n\}:s_i=s_j\}| }.
\label{eq_quantiletrans}
\end{equation}
Feldman et al. \cite{DBLP:conf/kdd/FeldmanFMSV15} showed that the dependence between $\xone_{i,(k)}$ and $s$ can be removed by using such a quantile-based transformation (c.f. Figure 1 in Feldman et al. \cite{DBLP:conf/kdd/FeldmanFMSV15}). 
Table \ref{tbl_ordinal} and \ref{tbl_ordinal_reg} list the results of applying the transformation of Eq. \eqref{eq_quantiletrans} for each non-binary attribute. Applying an ordinal transformation slightly decreased accuracy (or increased RMSE in regression), as it discards the modal information on the original attribute. 
\begin{table}[t!]
\caption{Classification results for the Adult dataset, with or without the ordinal transformation of Eq. \eqref{eq_quantiletrans}. ``With ordinal trans.'' (resp. ``Without ordinal trans.'') indicates an ordinal transformation is conducted (resp. is not conducted) for each attribute. ''cont. only'' indicates that non-numeric attributes in $\xone$ are discarded beforehand.} 
\begin{center}
  \begin{tabular}{|c|c|c|c|c|} \hline
  & \multicolumn{2}{ |c| }{Without ordinal trans.} & \multicolumn{2}{ |c| }{With ordinal trans.} \\ 
  \hline
   Algorithm & P\%-rule & Accuracy & P\%-rule & Accuracy \\\hline\hline
OLS & 0.30 & 0.84 & 0.29 & 0.83 \\\hline
2SDR & 0.83 & 0.82 & 0.82 & 0.82  \\\hline
OLS (cont. only) & 0.22 & 0.81 & 0.06 & 0.80 \\\hline
2SDR (cont. only) & 0.88 & 0.79 & 0.87 & 0.78 \\\hline
  \end{tabular}
\end{center}
\label{tbl_ordinal}
\end{table}
\begin{table}[t!]
\caption{Regression results for the C\&C dataset.} 
\begin{center}
  \begin{tabular}{|c|c|c|c|c|} \hline
  & \multicolumn{2}{ |c| }{Without ordinal trans.} & \multicolumn{2}{ |c| }{With ordinal trans.} \\ 
  \hline
   Algorithm & MD & RMSE & MD & RMSE \\\hline\hline
OLS  & 0.22 & 0.14 & 0.23 & 0.16 \\\hline
2SDR & 0.02 & 0.18 & 0.02 & 0.19 \\\hline
  \end{tabular}
\end{center}
\label{tbl_ordinal_reg}
\end{table}

\textbf{Generalized linear models:}
We also tried logistic regression in the second stage classifier. Logistic regression is a binary classification model that assumes the following relation between the attributes $x_i$ and target $y_i$:
\begin{equation}
 \Prob[y_i=1|x_i] = \frac{1}{1+e^{-x_i^\top \beta}},
 \label{eq_sigmoid}
\end{equation}
where $\beta$ is the model parameter to be learnt. 
Table \ref{tbl_lr} shows the results of classification when we replaced the second-stage classifier with the logistic regression. Compared with a linear model (Ridge classifier), this yielded a lower p\%-rule in the Adult dataset. This fact is consistent with Theorem \ref{thm_nocor}. It states that $\resx$ is asymptotically uncorrelated to $s$: However, a non-linear map such as the sigmoid function in Eq. \eqref{eq_sigmoid} can cause bias between the mapped $\resx$ and $s$.
We should also note that the more involved non-linear second stage classifiers, such as naive Bayes classifiers, support vector machines, and gradient boosting machines, resulted in a significantly lower p\%-rule than logistic regression because of their strong non-linearity.
\begin{table}[t!]
\caption{Classification result of 2SDR combined with logistic regression.}
\begin{center}
  \begin{tabular}{|c|c|c|c|} \hline
   Algorithm & dataset & P\%-rule & Accuracy \\\hline\hline
2SDR & Adult & 0.72 & 0.83 \\\hline
2SDR & German & 0.80 & 0.73 \\\hline
  \end{tabular}
\end{center}
\label{tbl_lr}
\end{table}

\section{Conclusion}

We studied indirect discrimination in classification and regression tasks.
In particular, we studied a two-stage method to reduce disparate impact.
Our method is conceptually simple and has a wide range of potential applications.
Unlike most of the existing methods, our method is general enough to deal with both classification and regression with various settings.
It lies midway between a fair data preprocessing and a fair estimator: It conducts a minimum transformation so that linear algorithm in the second stage is fair.
Extensive evaluations showed that our method complied the 80\%-rule the tested real-world datasets. 

The following are possible directions of future research:
\begin{itemize}
\item \textbf{Other criteria of fairness:} While the disparate impact considered in this paper is motivated by the laws in the United States, the notion of fairness is not limited to disparate impact \cite{berk2017fairness}. To name a few studies, the equalized odds condition \cite{DBLP:conf/nips/HardtPNS16} and disparate mistreatment \cite{DBLP:conf/www/ZafarVGG17} have been considered. Extending our method to other criteria of fairness would be interesting.
\item \textbf{Non-Linear second stage:} In this study, we restricted the second-stage algorithm to be linear. The main reason for doing so is that the first stage in 2SDR is designed to remove the correlation between $\haty$ and $s$, which is very suitable to linear algorithms (Theorem \ref{thm_indepdence} and \ref{thm_nocor}). 
We have also conducted some experiment with generalized linear model in the second stage (Section \ref{subsubsec_otheraspects}), where we observed a inferior fairness than a linear model. 
Extending our work to a larger class of algorithms would boost the accuracy of 2SDR on some datasets where non-linearity is important.
\end{itemize}

\clearpage
\bibliographystyle{plain}
\bibliography{main.bib}

\end{document}